\newcommand{\idim}{\textsc{dim}}
\title{Intrinisic Gradient Compression for Federated Learning}
\author{%
  Luke Melas-Kyriazi\thanks{Equal contribution} \\
  Department of Computer Science\\
  Oxford University\\
  \texttt{luke.melas@sjc.ox.ac.uk} \\
 \And
 Franklyn Wang$^{*}$ \\
 Harvard University \\
 Department of Mathematics\\
 Cambridge, MA 02138 \\
 \texttt{franklyn\_wang@college.harvard.edu} \\
}
\begin{document}

\maketitle

\begin{abstract}
Federated learning is a rapidly-growing area of research which enables a large number of clients to jointly train a machine learning model on privately-held data. One of the largest barriers to wider adoption of federated learning is the communication cost of sending model updates from and to the clients, which is accentuated by the fact that many of these devices are bandwidth-constrained. In this paper, we aim to address this issue by optimizing networks within a subspace of their full parameter space, an idea known as \emph{intrinsic dimension} in the machine learning theory community. We use a correspondence between the notion of intrinsic dimension and gradient compressibility to derive a family of low-bandwidth optimization algorithms, which we call \emph{intrinsic gradient compression algorithms}. Specifically, we present three algorithms in this family with different levels of upload and download bandwidth for use in various federated settings, along with theoretical guarantees on their performance. Finally, in large-scale federated learning experiments with models containing up to 100M parameters, we show that our algorithms perform extremely well compared to current state-of-the-art gradient compression methods. 
\end{abstract}

\section{Introduction}

The key paradigm of federated learning is that data is stored locally on edge devices, while model updates (either gradients or weights) are communicated over a network and aggregated by a central server. 
This setup enables edge computing devices to jointly learn a model without data sharing, thereby retaining their data privacy. 
However, the issue of communication bandwidth often stands in the way of large-scale deployment of federated learning systems: it can be very costly to send model updates over a network, especially when communicating with mobile phones and edge devices. 

To reduce bandwidth requirements for federated learning, it is natural to compress model updates before sending them over the network. Previous works in this direction \cite{ajiheafield2017sparse,Sattler2020RobustAC,lin2018deep,DBLP:conf/icml/RothchildPUISB020} have explored compression schemes including Top-$K$ sparsification (i.e. taking the top $K$ weights with the largest magnitude) and gradient sketching. 

At the same time, in the machine learning theory community, researchers have been working to understand what at first seems like an entirely different question: why do hugely overparametrized models generalize so well? One promising approach to answering this question has utilized the concept of \emph{intrinsic dimension}, defined for a given optimization problem as the smallest dimension $d$ for which we can solve the problem when the weights are restricted to a a $d$-dimensional manifold. To be precise, it is the smallest $d$ for which an optimization problem \begin{equation}\label{eq:form} \min_{\theta \in \mc{M}_d} \ell(\theta) \end{equation} has a satisfactory solution, where $\mc{M}_d$ is a $d$-dimensional manifold. If the intrinsic dimension of an optimization problem is low, then even if a model is vastly overparameterized, only a small number of parameters need to be tuned in order to obtain a good solution, which is often enough to imply certain generalization guarantees. 

We begin this paper by observing that the two problems above are naturally related. If one can find a solution to the problem by only tuning $d$ parameters, as in \Cref{eq:form}, then a corresponding low bandwidth algorithm can be found by simply running gradient descent on $\mc{M}_d$. This occurs because gradients on $\mc{M}_d$ are $d$-dimensional, and hence require less bandwidth to communicate.

However, for very small $d$ (as is desired), it is often insufficient to simply optimize a $d$-sized subset of a model's parameters, especially if this subset must be chosen manually for each neural network architecture. Thus, we are inspired to seek a more general family of these types of low-bandwidth algorithms. 

We rewrite the optimization problem in \Cref{eq:form} in the original parameter space as \[ \min_{\theta' \in \R^d} \ell(f_{A\theta'}) \] 
so then stochastic gradient descent in the original space can be written as 
\begin{equation}\label{eq:standard_vanilla}
\theta_{t+1} = \theta_t - \eta AA^{\top} \nabla_{\theta} \ell(f_{\theta})|_{\theta = \theta_t}. 
\end{equation}

We call this method \emph{static intrinsic gradient compression}, because our gradients are projected into a static (``intrinsic'') subspace. Now, \Cref{eq:standard_vanilla} admits a natural generalization, which allows us to explore more of the parameter space while still preserving a low level of upload bandwidth usage: 
\begin{equation}\label{eq:standard_tv} \theta_{t+1} = \theta_t - \eta A_tA_t^{\top} \nabla_{\theta} \ell(f_{\theta})|_{\theta = \theta_t} \end{equation}
where $A_t$ may vary with time. We call the set of all such algorithms \emph{intrinsic gradient compression algorithms}, and consider three particular instantiations for federated learning: static, $K$-subspace, and time-varying intrinsic gradient compression. 

The static algorithm is an extremely simple baseline; it simply projects the local model update to a lower-dimensional space before sending it to the server to be aggregated. Nonetheless, we find that it performs remarkably well in practice compared to recent gradient compression schemes. The $K$-subspace and time-varying algorithms are designed specifically for federated learning: the $K$-subspace method reduces the upload bandwidth requirements of the static algorithm, while the time-varying method improves performance across multiple of distributed training.

Our approach is model-agnostic and highly scalable. In experiments across multiple federated learning benchmarks (language modeling, text classification, and image classification), we vastly outperform prior gradient compression methods, and show strong performance even at very high compression rates (e.g. up to $1000\times$). 

Our contributions are as follows. 

\begin{itemize}
    \item We find a general class of optimization algorithms based on the notion of intrinsic dimension that use low amounts of upload bandwidth, which we denote \emph{intrinsic gradient compression algorithms}.
    \item We specify three such algorithms: static compression, time-varying compression and $K$-subspace compression, with different levels of upload and download bandwidth for use in various federated settings.
    \item We provide theoretical guarantees on the performance of our algorithms.
    \item Through extensive experiments, we show that these methods outperform prior gradient compression methods for federated learning, obtaining large reductions in bandwidth at the same level of performance. 
\end{itemize}

\section{Preliminaries}\label{sec:prelim}

\subsection{Intrinsic Dimension}

The concept of intrinsic dimension was introduced in the work of \cite{li2018measuring}, as a way of evaluating the true difficulty of an optimization problem. While this can usually be done by counting the number of parameters, some optimization problems are easier than others in that solutions may be far more plentiful. To illustrate this concept, we will take an optimization problem over a large space $\Theta^{1}$ and a small space $\Theta^{2}$ so that for any $\theta \in \Theta^{2}$, for the function $f$ we have $f(\theta
') \in \Theta_1$. If $\theta$ is in the image of $f$ on $\Theta^2$, one can write 
\begin{equation}\label{eq:subspace}
\ell(f_{\theta}) = \ell(f_{g(\theta')}) 
\end{equation} 
where $g: \Theta^2 \rightarrow \Theta^{1}$ and thus transform the original problem over $\Theta^{1}$ into an optimization problem over $\Theta^{2}$. If we can still find good solutions to the original problem where $\theta' \in \Theta^{2}$, then the problem may be easier than originally expected. Intuitively, even though the ``true" dimension of the optimization problem is $D$, the fact that good solutions can be found while searching over a manifold of dimension $d$ suggests that the problem is easier than a typical dimension $D$ optimization problem.

With this, we can now define the notion of intrinsic dimension. The intrinisic dimension $\idim(\ell, L)$ with respect to a task $\ell$ and performance threshold $L$ is equal to the smallest integer $d$ so that optimizing \Cref{eq:subspace} on task $\ell$ could lead to a solution of performance at least equal to $L$. The intrinsic dimension is not completely knowable, because we cannot find the ``best performing model'' exactly. However, if say, training with some optimization algorithm gives us a solution to \Cref{eq:subspace} with loss $\le L$ and with $d$ dimensions, we can say with certainty that $\idim(\ell, L) \le d$.

Throughout this paper we will always take $g(\theta') = A\theta' + \theta_0$ for a $D \times d$ matrix $A$, and take $\Theta^{2} = \R^{d}$, and $\Theta^{1} = \R^{D}$, where $D > d$, where $\theta_0$ is the original value of the expression. Consequently, the image of $f$ on $\Theta^2$ (and thus the dimension over which we optimize) is an affine $d$-dimensional subspace of $\R^{D}$. The affine nature is crucial -- it allows us to do a full fine-tune starting from a pretrained checkpoint, which is not possible if we just use a standard subspace.

\subsection{Related Work}

Below, we describe how our contribution relates to relevant prior work. Due to space constraints, we describe additional related work in \Cref{app:additional_related_work}. 

\paragraph{Intrinsic Dimension}

As discussed in the previous section, \cite{li2018measuring} introduced the concept of intrinsic dimensionality to gain insight into the difficulty of optimization problems.\footnote{The concept of intrinsic dimension has also been used to describe the dimensionality of datasets; these works are not directly related to ours, but we provide an overview of them in \Cref{app:additional_related_work}.} \cite{aghajanyan2020intrinsic} followed up on this work by considering the setting of finetuning models in natural language processing. They show that the intrinsic dimension of some of these tasks is surprisingly low, and claim that this result explains the widespread success of the language model finetuning. 

These works form the basis of our static intrinsic gradient compression algorithm. Whereas these works use the concept of intrinsic dimension as a mechanism for understanding optimization landscapes, we use it as a tool for gradient compression. We then extend these works by introducing two new algorithms designed for the federated setting: $K$-subspace and time-varying intrinsic dimension. Our algorithms were not explored by previous works because they are uniquely interesting from the perspective of federated learning: they are designed to reduce communication bandwidth rather than to shed insight into objective landscapes. 

\paragraph{Gradient Compression}
With the proliferation of large-scale machine learning models over the past decade, the topic of distributed model training has gained widespread attention. Federated learning combines the challenges of distributed training and limited network bandwidth, motivating the use of gradient compression. For example, a single gradient update for a 100 million parameter model takes approximately 0.4 gigabytes of bandwidth (uncompressed). 

Gradient compression methods may be divided into two groups: biased and unbiased methods. Unbiased gradient compression estimators tend to be more straightforward to analyze, and are generally better understood for stochastic gradient descent. As long as their variance is bounded, it is usually possible to obtain reasonable bounds on their performance. Biased gradient compression estimators are typically much more challenging to analyze, although they often deliver good empirical performance. 
For example, top-$K$ compression is a popular (biased) method which takes the $k$ elements of the gradient with largest magnitudes. Numerous papers are dedicated to the topic of debiasing such methods to make them more amenable to theoretical analysis. In particular, many of these use the idea of error feedback \cite{stich2020error, ef21} to obtain theoretical guarantees on otherwise biased algorithms, like Top-K \cite{lin2018deep} and FetchSGD \cite{DBLP:conf/icml/RothchildPUISB020}. Other more exotic alternative ideas also exist, like \cite{albasyoni2020optimal}, which finds an optimal gradient compression algorithm, albeit one which is computationally infeasible.

\paragraph{Federated and Distributed Learning} 
From the introduction of federated learning \cite{mcmahan2017communication}, it was clear that communication costs represented a significant challenge to its widespread adoption. \cite{mcmahan2017communication} introduced the FedAvg algorithm, which aims to reduce communication costs by performing multiple local updates before communicating model updates. However, even with local update methods such as FedAvg, communicating model updates often remains too costly.\footnote{Additionally, the benefits of these methods are vastly diminished when clients have a small amount of local data, as many rounds of communication are necessary.} As a result, the area of gradient compression has attracted recent attention within the federated learning community. 

Top-$K$ compression is among the simplest and most intuitive compression schemes. \cite{ajiheafield2017sparse} showed that top-$K$ compression with $K = 1\%$ produced good results on neural machine translation and MNIST image classification tasks. \cite{shi2019understanding} provided a theoretical analysis and an approximate top-$K$ selection algorithm to improve sampling efficiency. \cite{Sattler2020RobustAC} combined top-$K$ compression with ternary quantization and a Golomb encoding of the weight updates. \cite{konecny2018federated} study multiple strategies for improving communication efficiency, including low-rank updates, randomly masked updates, and sketched updates. Their low-rank update strategy is related to our method, but we differ from them in that we compute our low-dimensional updates differently, perform large-scale experiments, give theoretical analysis, and consider the trade-off between download and upload bandwidth (only upload bandwidth). Also related, \cite{vkj2019powerSGD} proposed a low-rank version of SGD based on power iteration for data-parallel distributed optimization. Most recently, FetchSGD~\cite{DBLP:conf/icml/RothchildPUISB020} used sketching to reduce the size of gradients before sending them over the network. FetchSGD is the current state-of-the-art in gradient compression.

Finally, it is important to note that local update methods (e.g. FedAvg) and gradient compression methods may be combined. In particular, one can simply perform multiple training steps before compressing resulting the model update ($\theta^{\text{final}}_{\text{local}} - \theta^{\text{initial}}$). For fair comparison to FetchSGD, in our experiments, we only perform one local step per update.

\section{Methods}\label{sec:fedgradient}

\subsection{Intrinsic Gradient Compression}

In this subsection, we characterize a family of low-bandwidth optimization algorithms based on the notion of intrinsic dimension. In the following subsection, we will describe three algorithms from this family in detail, which we implemented

We start from the optimization problem induced by intrinsic dimension (\Cref{eq:subspace}). If we directly run gradient descent on \Cref{eq:subspace} with respect to the intrinsic weights $\theta'$, we obtain an equation of the following form: 
\begin{align*} 
\theta_{t+1}' 
&= \theta_{t}' - \eta \nabla_{\theta'} \left( \ell (f_{g(\theta')}) \right) = \theta_{t}' - \eta \nabla_{\theta'} \left( \ell (f_{A \theta'}) \right) \\
&= \theta_{t}' - \eta A^{\top}\nabla_{\theta}(\ell (f_{\theta}))^{\top}|_{\theta=A\theta'_t+\theta_0}
\end{align*}

Then, left-multiplying both sides by $A$ we obtain 
\begin{equation}\label{eq:gradcompress}
    \theta_{t+1} = \theta_t - \eta \underbrace{A \underbrace{A^{\top} \nabla_{\theta}(\ell(f_{\theta}))|_{\theta = \theta_t}}_{\text{compressed gradient}}}_{\text{approximate gradient}}
\end{equation} 

Note that here, we can interpret $A^{\top} \nabla_{\theta} (\ell(f(\theta)))|_{\theta = \theta_t}$ as a compressed gradient with dimension $d$, and $AA^{\top}\nabla_{\theta} (\ell(f(\theta)))|_{\theta = \theta_t}$ as the approximate gradient. This inspires us to consider the more general family of optimization algorithms given by
\begin{equation}\label{eq:general}\theta_{t+1} = \theta_t - \eta A_t A_t^{\top} (\bm{v}_t), 
\end{equation}
where $\bm{v}_t$ is a $D$ dimensional vector computed from data available at timestep $t$ that plays a similar role to a gradient, but may not be an exact gradient, and the $A_t$ are all $D \times d$ matrices known ahead of time (say, generated with random seeds). One intuitive way of interpreting this algorithm is that $\theta_{t+1} - \theta_t$ is constrained to lie in a low-dimensional subspace, namely that given by the span of $A_t$. This family of algorithms can be made to use only $d$ upload bandwidth, as only the vector $A_t^{\top}(\bm{v}_t)$ must be uploaded. Furthermore, note that \Cref{eq:general} has no references to the intrinsic weights $\theta'$, meaning that it represents a general optimization algorithm in the original space. Formally,
\begin{proposition}\label{thm:lowupload}
All optimization algorithms of the form \[  \theta_{t+1} = \theta_t - \eta A_t A_t^{\top} (\bm{v}_t) \] can be simulated with $d$ upload bandwidth in a standard federated learning setting, where $\bm{v}_t$ is a function that can be calculated by the client at time $t$ combined with all data from the server, and $A_t$ is a $D \times d$ matrix known to both the client and the server. 
\end{proposition}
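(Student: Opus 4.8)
The plan is to construct an explicit federated protocol that realizes the prescribed update while sending only $d$ scalars over the upload channel. The entire argument rests on a single algebraic observation: by associativity of matrix multiplication,
\[
A_t A_t^{\top} \bm{v}_t = A_t\bigl(A_t^{\top} \bm{v}_t\bigr),
\]
and the inner factor $\bm{u}_t := A_t^{\top}\bm{v}_t$ lies in $\R^d$. The key point is that the $D$-dimensional approximate gradient $A_t A_t^{\top}\bm{v}_t$ never needs to cross the upload channel; only its compressed $d$-dimensional version $\bm{u}_t$ does.

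Concretely, I would spell out the round-$t$ protocol as follows. The client first forms $\bm{v}_t \in \R^D$ locally; this is legitimate because, by hypothesis, $\bm{v}_t$ is computable from the client's own data together with whatever the server has broadcast. Since $A_t$ is known to the client (for instance regenerated from a shared random seed, so that it need not be transmitted at all), the client then computes $\bm{u}_t = A_t^{\top}\bm{v}_t \in \R^d$ and uploads exactly these $d$ numbers; this is the only use of the upload channel, so the upload cost is $d$. The server, which likewise knows $A_t$, reconstructs $A_t\bm{u}_t = A_tA_t^{\top}\bm{v}_t$ and performs $\theta_{t+1} = \theta_t - \eta A_t\bm{u}_t$, reproducing the target update \emph{exactly} rather than approximately.

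For the genuinely federated, multi-client setting I would additionally observe that compression commutes with aggregation: if client $i$ uploads $\bm{u}_t^{(i)} = A_t^{\top}\bm{v}_t^{(i)}$, then $\sum_i \bm{u}_t^{(i)} = A_t^{\top}\bigl(\sum_i \bm{v}_t^{(i)}\bigr)$, so the server may sum (or average) the received $d$-dimensional messages before applying $A_t$. Per-client upload therefore stays at $d$, independent of the number of clients.

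There is no substantive analytical obstacle here; the correctness of the server's reconstruction is simply the associativity identity above, so the only real work is careful bookkeeping of what is known to whom — namely verifying that the client genuinely possesses everything needed to form both $\bm{v}_t$ and $A_t$, which is exactly what the hypotheses grant. I would close by emphasizing what the statement does \emph{not} claim: it constrains only upload bandwidth. The server remains free to broadcast $\theta_t$ or other $D$-dimensional quantities downward, so download bandwidth is unconstrained by this proposition and is instead the concern of the specific algorithms (static, $K$-subspace, and time-varying) analyzed in the sequel.
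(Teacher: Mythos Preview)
Your proposal is correct and is essentially identical to the paper's own reasoning: the paper does not give a separate formal proof of this proposition, but the surrounding text makes exactly the observation you do, namely that the client need only upload the $d$-dimensional vector $A_t^{\top}\bm{v}_t$ and the server then left-multiplies by $A_t$ (cf.\ the discussion around \Cref{eq:gradcompress} and the sentence ``This family of algorithms can be made to use only $d$ upload bandwidth, as only the vector $A_t^{\top}(\bm{v}_t)$ must be uploaded''). Your additional remarks on aggregation across clients and on the proposition saying nothing about download bandwidth are accurate and align with how the paper subsequently treats the three specific algorithms.
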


We call all algorithms of the form above \emph{intrinsic gradient compression algorithms}. 

\begin{table*}
\renewcommand{\arraystretch}{1.2}
\centering
\begin{tabular}{l | c | c | c }
Intrinsic Gradient Compression Method & Upload & Download & Dimensions Explored  \\
\hline \hline
No Compression                  & $DE$      & $DE$          & $D$   \\
\hline
Static                          & $dE$      & $dE$          & $d$   \\ 
Time-Varying                    & $dE$      & $2dE$         & $dE$  \\ 
$K$-Subspace                    & $dE$      & $dEK$         & $dK$  \\
$K$-Subspace + Time-Varying     & $dE$      & $2dEK$        & $dEK$ \\
\end{tabular}
\vspace{-2mm}
\caption{Bandwidth and Performance Comparisons. The bandwidth refers to that of that used for each client. Note that we break upload and download bandwidth into separate columns, because download speeds can often be considerably faster than upload speeds and we may thus be willing to tolerate higher values of download bandwidth. A realistic example of the values of the variables above is e.g. $d = 10^{3}, D = 10^{8}, E = 20, K = 8$.}
\vspace{-4mm}
\label{tbl:tradeoffs}
\end{table*}

\subsection{Algorithms}

While \Cref{thm:lowupload} shows that any algorithm of the form \Cref{eq:general} can be implemented with low levels of upload bandwidth, not every algorithm of the form \Cref{eq:general} can be implemented with low levels of download bandwidth as well. In this section, we describe three particular intrinsic gradient compression algorithms which use low amounts of both upload and download bandwidth. We show the theoretical tradeoffs between each of these algorithms in \Cref{tbl:tradeoffs}. 

These federated learning algorithms can be decomposed into three main phases.
\begin{itemize}
    \item \textbf{Reconciliation:} The client reconciles its model with the server's copy of the model.
    \item \textbf{Compression:} The local model calculates, compresses, and sends its local gradient to the server.
    \item \textbf{Decompression:} The server updates its own copy of the model using the estimated gradients it has received. 
\end{itemize}

Compression and decompression are shared between all algorithms, while each algorithm has a distinct reconciliation phase. 

\paragraph{Static Intrinsic Gradient Compression}

The static intrinsic gradient compression simply involves projecting gradients into a fixed (``static'') low-dimensional space and reconstructing them on the server: 
\[  \theta_{t} = \theta_{t-1} - \eta AA^{\top} \nabla_{\theta} \mc{L}(\theta_{t-1}) \]
Nonetheless, it performs remarkably well in practice (see \Cref{sec:exps}). The full algorithm is given in Algorithm~\ref{alg:FedSSC}.

Note that in the reconciliation phase, the parameters $\theta^{c}$
(which are on the server)
will always be equal to $\theta_0 + A\Sigma$ for some $\Sigma \in \R^{d}$. Thus, the server can just send $\Sigma$ to the client, using $d$ download bandwidth. 
In the compression phase, the client compresses the gradient by multiplying by $A^{\top}$, and for decompression the server multiplies this by $A$. 
The client then compresses the gradient by multiplying by $A^{\top}$, and the server decompresses it by multiplying it by $A$. 

\begin{algorithm}[t]
\small
\caption{Static Intrinsic Gradient Compression}
\begin{algorithmic}
\STATE \textbf{input:} learning rate $\eta$, timesteps $T$, local batch size $\ell$, clients per round $W$
\STATE Create matrix $A \in \R^{D \times d}$ with $\BE[AA^{\top}] = I_D$. Spawn $A$ on all nodes using a suitable random number generator. 
\STATE Current Vector: $\Sigma_{0} = 0$
\FOR{$t = 1, 2 \cdots T$}
\STATE Randomly select $W$ clients $c_1, \ldots c_W$.
\LOOP\STATE{\{In parallel on clients $\{c_i\}_{i=1}^{W}$\}} 
\STATE Download $\Sigma_{t - 1}$, calculate current $\theta_{t-1} = \theta_0 + A(\Sigma_{t - 1}) $.
\STATE Compute stochastic gradient $g_{i}^{t}$ on batch $B_i$ of size $\ell$: $g_{i}^{t} = \frac{1}{\ell} \sum_{j=1}^{\ell} \nabla_{\theta} \mathcal{L}(\theta_{t-1}, z_j)$ where $B_i = \{z_j\}_{j=1}^{\ell}$. 
\STATE Sketch $g_{i}^{t}$ to $S_i^{t} = A^{\top}g_{i}^{t}$ and upload it to the aggregator.
\ENDLOOP
\STATE Aggregate sketches $S^{t} = \frac{1}{W} \sum_{i=1}^{W} S_i^{t}$
\STATE Unsketch: $\Delta_{t} = AS^{t}$
\STATE Update: $\theta_{t} = \theta_{t - 1} - \eta\Delta_{t}$, $\Sigma_{t} = \Sigma_{t - 1} - \eta S^{t}$.
\ENDFOR
\end{algorithmic}
\label{alg:FedSSC}
\end{algorithm}

\paragraph{$K$-Subspace Static Intrinsic Gradient Compression}

The $K$-subspace algorithm is motivated by the fact that in some cases, upload bandwidth is more heavily constrained than download bandwidth. Rather than using a single compression matrix $A$, we use a set of $K$ different compression matrices $\{A^{(i)}\}_{i=1}^{K}$, each corresponding to a different subspace. At each iteration, each client is randomly assigned one of these $K$ matrices. Each client then explores a subspace of dimension $d$ and uploads a vector of size $d$ to the server. Finally, the server aggregates these local updates into a global update of size $dK$, which is downloaded by each client. In this way, it is possible to explore a subspace of size $dK$ using only $d$ upload bandwidth. With $K=1$, this algorithm is equivalent to static gradient compression. The full algorithm is given in Algorithm~\ref{alg:FedkTVSC}.
\begin{algorithm}[t]
\footnotesize
\vspace{1mm}\vspace{1mm}
\caption{$K$-Subspace Intrinsic Gradient Compression}
\begin{algorithmic}
\STATE \textbf{input:} distinct subspaces $K$, learning rate $\eta$, timesteps $T$, local batch size $\ell$, clients per round $W$
\STATE Create matrices $A^{(1)}, A^{(2)}, \ldots A^{(K)} \stackrel{\text{i.i.d.}}{\sim} A$ where $A \in \R^{D \times d}$ with $\BE[AA^{\top}] = I_D$. Spawn across all nodes using a random seed $s_t$ which is distinct but generates one of $A^{(1)}, A^{(2)}, \ldots A^{(K)}$.
\STATE Current Vector: $\Sigma^{\mathrm{current}(k)} = 0$ for $k = 1, 2, \ldots K$.
\FOR{$e = 1, 2, \ldots E$}

\FOR{$t = 1, 2 \cdots T$}
\STATE Randomly select $W$ clients $c_1, \ldots c_W$.
\LOOP\STATE{\{In parallel on clients $\{c_i\}_{i=1}^{W}$\}} 
\STATE Download $\Sigma^{\mathrm{current}(k)}$ for $k = 1, \ldots K$, calculate current 
\STATE \[ \theta^{c_i}_e = \theta_0 + \sum_{k=1}^{K} A^{(k)} \Sigma^{\text{current}(k)} \]
\STATE Choose a random $k_1 \sim \text{DUnif}(\{1, 2, \ldots K\})$ 
\STATE Compute stochastic gradient $g_{i}^{t}$ on batch $B_i$ of size $\ell$: $g_{i}^{t} = \frac{1}{\ell} \sum_{j=1}^{\ell} \nabla_{\theta} \mathcal{L}(\theta_{e}^{c_i}, z_j)$ where $B_i = \{z_j\}_{j=1}^{\ell}$. 
\STATE Sketch $g_{i}^{t}: S_i^{(e)t} = (k_1, A^{(k_1)\top}g_{i}^{t})$ and upload it to the aggregator.
\ENDLOOP
\STATE Write sketches received as $\{S^{(e)t}_w\}_{w=1}^{W} = \{(j_w, C_w^{(e)t})\}_{w=1}^{W}$.
\STATE Unsketch $S^{(e)t}$ to get $\Delta^{(e)t} = \frac{1}{W}\sum_{w=1}^{W} A^{(j_w)} C^{(e)t}_w $
\STATE Update: $\theta^{\mathrm{current}} = \theta^{\mathrm{current}} - \eta\Delta^{(e)t}$, 
\FOR{$k = 1, 2 \ldots K$}
\STATE Update: $\Sigma^{\mathrm{current}(k)} = \Sigma^{\mathrm{current}(k)} - \frac{\eta}{W} \sum_{j_w = k} C_w^{(e)t} $.
\ENDFOR
\ENDFOR
\ENDFOR
\end{algorithmic}
\vspace{1mm}\vspace{1mm}
\label{alg:FedkTVSC}
\end{algorithm}

\paragraph{Time-Varying Intrinsic Gradient Compression}

Finally, the time-varying algorithm utilizes the fact that changing the subspace in which we are optimizing is nearly costless: it simply involves sending the random seed $s_i$ from which the (pseudo-)random matrix $A_i$ may be generated. Rather than using one (or a set of) static compression matrices for all epochs (i.e. one round of training over all clients), we generate a new matrix $A_i$ at each epoch $i$. Formally, we have:
\[ \theta_t = \theta_{t-1} - \eta A_{e}A_{e}^{\top} \nabla_{\theta} \mc{L}(\theta_{t-1}) \]
In this case, our algorithm can be implemented with at most $2d$ bandwidth used per client per timestep, so over $E$ epochs there is $2dE$ bandwidth used total on downloading. Since this bandwidth is twice that of static subspace compression, but we search $E$ times more directions in the space, this algorithm is particularly useful when we have many epochs.

Letting $\theta_{e}^{c}$ be the client parameters at epoch $e$, 
note that we have the value of $\theta_{e-1}^{c}$ when performing reconciliation. Now we can write 
\[ \theta_{e}^{c} - \theta_{e-1}^{c} = (\theta_{e}^{c} - \theta_{e-1}^{\text{final}}) + (\theta_{e-1}^{\mathrm{final}} - \theta_{e-1}^{c}) \]
We can see that $(\theta_{e}^{c} - \theta_{e-1}^{\text{final}})$ lies in the span of $A_e$ and $(\theta_{e-1}^{\text{final}} - \theta_{e-1}^{c})$ lies in the span of $A_{e-1}$, showing the validity of the algorithm, which is given in full in Algorithm~\ref{alg:FedTVSC}.

Finally, we note that it is possible to use both $K$-subspace and time-varying compression together. In this case, a new batch of $\{A_e^{(i)}\}_{i=1}^{K}$ of $K$ compression matrices is generated at each epoch $e$. We do not experiment with this setup, but it is likely to show further improvements over using each of these methods alone. 

\begin{algorithm}[t]
\footnotesize
\caption{Time-Varying Intrinsic Gradient Compression}
\begin{algorithmic}
\STATE \textbf{input:}  learning rate $\eta$, timesteps $T$, local batch size $\ell$, clients per round $W$
\FOR{$e = 1, 2, \ldots , E$}
\STATE Create matrix $A_e \stackrel{\text{i.i.d.}}{\sim} A$ where $A \in \R^{D \times d}$ with $\BE[AA^{\top}] = I_D$, and spawn it on all nodes. 
\STATE Current, Final Vector: $\Sigma^{\mathrm{current}}_{e} = 0$, $\Sigma^{\mathrm{final}}_{e} = 0$
\FOR{$t = 1, 2 \ldots ,T$}
\STATE Randomly select $W$ clients $c_1, \ldots c_W$.
\LOOP\STATE{\{In parallel on clients $\{c_i\}_{i=1}^{W}$\}} 
\STATE Download $\Sigma^{\mathrm{current}}_e, \Sigma^{\mathrm{final}}_{e-1}$, calculate current $\theta^{c_i}_e = \theta^{c_i}_{e-1} + A_{e-1}(\Sigma_{e - 1}^{\mathrm{final}} - \Sigma^{\mathrm{last}}) + A_e(\Sigma^{\mathrm{current}}_e)$.
\STATE Update $\Sigma^{\mathrm{last}} = \Sigma^{\mathrm{current}}_e$.
\STATE Compute stochastic gradient $g_{i}^{t}$ on batch $B_i$ of size $\ell$: $g_{i}^{t} = \frac{1}{\ell} \sum_{j=1}^{\ell} \nabla_{\theta} \mathcal{L}(\theta_{e}^{c_i}, z_j)$ where $B_i = \{z_j\}_{j=1}^{\ell}$.
\STATE Sketch $g_{i}^{t}: S_i^{(e)t} = A_e^{\top}g_{i}^{t}$ and upload it to the aggregator.
\ENDLOOP
\STATE Aggregate sketches $S^{(e)t} = \frac{1}{W} \sum_{i=1}^{W} S_i^{(e)t}$
\STATE Unsketch: $\Delta^{(e)t} = A_e S^{(e)t}$
\STATE Update: $\theta^{\mathrm{current}} = \theta^{\mathrm{current}} - \eta\Delta^{(e)t}$, $\Sigma_e^{\mathrm{current}} = \Sigma_{e}^{\mathrm{current}} - \eta S^{(e)t}$.
\ENDFOR
\STATE Let $\Sigma_{e}^{\mathrm{final}} = \Sigma_{e}^{\mathrm{current}}$.
\ENDFOR
\end{algorithmic}
\label{alg:FedTVSC}
\end{algorithm}

\paragraph{Choice of Compression Matrix}\label{sec:fedgradient_choice}

Here, we discuss how to choose $A$. Our methods are theoretically agnostic to the choice of $A$, and depend only on the existence of efficient subroutines for calculating the matrix-vector products $Ax$ and $A^{\top}y$. Nonetheless, the choice of $A$ has significant practical considerations, which we discuss here.

The naive choice is to let $A$ be a $D \times d$ random dense matrix, but such a choice is impossible due to memory constraints. For example, if we aim to train even a small version of BERT (100M parameters) with an intrinsic dimension of $1000$, we would need to store a matrix with $10^{11}$ entries. 

Our approach, also taken by \cite{aghajanyan2020intrinsic, li2018measuring}, utilizes the \textit{Fastfood transform} \cite{DBLP:conf/icml/LeSS13}. This transform expresses the $D \times d$ matrix $A_i$ as $ A_i = \text{Unpad}_DB_iH\Pi_i G_iH\text{Pad}_{2^{\ell}}$ where $2^{\ell}$ is the smallest power of two larger than $D$, $H$ is a standard Hadamard matrix, $B_i$ is a random diagonal matrix with independent Rademacher entries (random signs), $\Pi$ is a random permutation matrix, $G$ is a random diagonal matrix with independent standard normal entries, $\text{Pad}_{2^{\ell}}$ to be a linear operator which simply pads a $d$-dimensional vector $v$ with zeroes until it has size $2^{\ell}$, and $\text{Unpad}_{D}$ is a linear operator which takes the first $D$ elements from a $2^{\ell}$-dimensional vector. Since we can quickly compute a matrix-vector product by $H$ with a fast Walsh-Hadamard transform, we can perform a matrix multiplication by $A_iA_i^{\top}$ in $O(\ell2^{\ell}) = O(D\log D)$ time and $O(D)$ space. 

Finally, to ensure that we do not need to communicate the matrices $A_i$, we generate each matrix pseudorandomly from a random seed $s_i$. Thus, the matrices $A_i$ do \textit{not} need to be transferred over the network.

\subsection{Theoretical Guarantees}

In this section, we provide guarantees on static, time-varying, and $K$-subspace intrinsic gradient compression. We focus on convex functions, which are the most amenable to analysis. First, we contend that it is not interesting to prove guarantees of the form 
``time-varying intrinsic gradient compression works well for \emph{all convex functions}''.
This is because the hypotheses are too weak to produce meaningful results, even if one assumes that one has access to oracle convex optimization routines which return the minimizer (rather than just an approximate optimizer).   %

Two representative works, similar to ours, which consider a setup where we have access to an oracle which finds minimizers of convex functions are \cite{stich2013optimization} and \cite{ssobound}. \cite{stich2013optimization} considers an optimization algorithm which searches over random $1$-dimensional subspaces, showing that theoretically, searching $1$ random direction $n$ times performs about as well as searching $n$ directions once, offering no bandwidth benefit in our context. \cite{ssobound} shows a similar result without requiring random subspaces. Thus, showing interesting guarantees for arbitrary convex functions is likely quite challenging.

Rather, in the flavor of intrinsic dimension, we assume that 
our convex optimization problems are ``easier" than standard problems, in that 
searching few directions is likely to yield good solutions. 
In this case, we show that time-varying intrinsic dimension works even better than static compression.
Intuitively, this is because each random subspace sampled in the time-varying algorithm contains a point which allows us to meaningfully reduce our loss. As a consequence, when we consider many subspaces sequentially, we can reduce our loss exponentially. 

Thus, we state our hypotheses via a formalized definition of intrinsic dimension.
\begin{definition}
A convex function $g: \mathbb{R}^{D} \rightarrow \mathbb{R}$ has \textit{intrinsic dimension} $(\delta, d, \rho)$ if for all $\theta_0$ we have \[ \mathbb{P}\pa{\min_{e \in \mc{H}} g(\theta_0 + e) - g^{\star} \le \rho(g(\theta_0) - g^{\star})} \ge 1 - \delta   \] 
where $\mc{H}$ is a uniformly chosen $d$-dimensional subspace over the Grassmanian, and $g^{\star}$ is 
the minima of the function $g$. 
\end{definition}

The result on static compression now follows directly. We merely need to account for the fact that we are using an approximate optimization algorithm and not an oracle optimization algorithm. However, since a convex problem on a subspace is convex, this follows directly from well-known guarantees on gradient descent.

In what follows, we assume that from each step we have access to $\bm{g}_t$, an unbiased estimate of the true gradient of $g$ at time $t$, given the current $\theta$ we have -- such a $\bm{g}_t$ naturally emerges from our methods, where the randomness comes from the data points in the batch. In all cases, we assume that $A$ is an orthonormal basis of a random subspace sampled according to the Grassmanian. All proofs are given in \Cref{appa:proofs}. 

\begin{theorem}\label{thm:static}
For the static compression algorithm, if the function $g$ has intrinsic dimension $(\delta, d, \rho)$, we have \[ \mathbb{P}\pa{g(\hat{\theta}) - g^{\star} \le \rho(g(\theta_0) - g^{\star}) + \epsilon} \ge 1 - \delta \] if we take $\tilde{O}(\sigma^2 / \epsilon^2)$ total steps where $\hat{\theta}$ is obtained by running the static compression algorithm, and $\sigma^2 = \mathrm{Var}(A^{\top} \bm{g}_t)$. 
\end{theorem}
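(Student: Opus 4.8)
The plan is to reduce the static compression algorithm to ordinary stochastic gradient descent on a lower-dimensional convex function, and then invoke a standard convergence guarantee. Define the \emph{intrinsic objective} $h(\theta') = g(\theta_0 + A\theta')$ on $\R^d$. Since $g$ is convex and $\theta' \mapsto \theta_0 + A\theta'$ is affine, $h$ is convex, and a direct computation gives $\nabla h(\theta') = A^\top \nabla g(\theta_0 + A\theta')$. Hence the ambient update $\theta_t = \theta_{t-1} - \eta A A^\top \bm{g}_t$ is exactly the image under $\theta' \mapsto \theta_0 + A\theta'$ of the iteration $\theta'_t = \theta'_{t-1} - \eta A^\top \bm{g}_t$, where (using $A^\top A = I_d$) the vector $A^\top \bm{g}_t$ is an unbiased estimate of $\nabla h(\theta'_{t-1})$ with variance $\sigma^2 = \mathrm{Var}(A^\top \bm{g}_t)$ by hypothesis. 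Thus running static compression for $T$ steps in $\R^D$ is identical to running $T$ steps of SGD on the convex function $h$ in $\R^d$.

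Next I would control the bias introduced by restricting to the subspace. Write $h^\star = \min_{\theta' \in \R^d} h(\theta') = \min_{e \in \mc{H}} g(\theta_0 + e)$, where $\mc{H} = \mathrm{span}(A)$ is a uniformly random $d$-dimensional subspace over the Grassmannian. By the definition of intrinsic dimension $(\delta, d, \rho)$, the event $\mc{E} = \{\, h^\star - g^\star \le \rho(g(\theta_0) - g^\star)\,\}$ holds with probability at least $1 - \delta$ over the draw of $A$. I would condition on $\mc{E}$ for the remainder of the argument, noting that it depends only on the random subspace and is independent of the data randomness driving the stochastic gradients.

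I would then apply the well-known guarantee for SGD on convex functions: with an appropriately tuned (or decaying) step size $\eta$ and the averaged or best iterate $\hat\theta'$, one has $h(\hat\theta') - h^\star \le \epsilon$ after $\tilde{O}(\sigma^2/\epsilon^2)$ steps, where the suppressed logarithmic factors and problem-dependent constants (the initial distance $\|\theta'_0 - \theta'_\ast\|$ to the subspace optimum) are absorbed into the $\tilde{O}$. Writing $\hat\theta = \theta_0 + A\hat\theta'$, so that $g(\hat\theta) = h(\hat\theta')$, and decomposing
\[ g(\hat\theta) - g^\star = \big(h(\hat\theta') - h^\star\big) + \big(h^\star - g^\star\big) \le \epsilon + \rho(g(\theta_0) - g^\star) \]
on the event $\mc{E}$, which occurs with probability at least $1-\delta$, yields the claimed bound.

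The main obstacle — really the only nontrivial point — is the bookkeeping between the two independent sources of randomness: the factor $1-\delta$ comes entirely from the random choice of subspace, whereas the additive $\epsilon$ comes from the stochastic optimization on a \emph{fixed} (conditioned) subspace. One must ensure the SGD guarantee invoked is a high-probability statement, not merely an in-expectation one, so that the two can be combined cleanly; this is precisely where the $\tilde{O}$ notation pays for the logarithmic overhead of such a bound as well as the dependence on the initial distance. Everything else reduces to the standard convex SGD analysis applied verbatim to $h$.
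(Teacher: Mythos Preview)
Your proposal is correct and follows essentially the same approach as the paper: define $h(\theta')=g(\theta_0+A\theta')$, note its convexity, decompose $g(\hat\theta)-g^\star=(h(\hat\theta')-h^\star)+(h^\star-g^\star)$, bound the first term by a standard $\tilde O(\sigma^2/\epsilon^2)$ SGD guarantee and the second by the intrinsic-dimension assumption. Your treatment is in fact a bit more careful than the paper's, explicitly verifying the equivalence between static compression and SGD on $h$ (via $A^\top A=I_d$) and separating the subspace randomness from the stochastic-gradient randomness.
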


For $K$-subspace compression, we do not obtain stronger theoretical guarantees than static, but we include the result for completeness. Note that they use the same amount of upload bandwidth total, because $K$-varying saves a factor of $K$ on upload. We also need a further assumption on the ratio of the variance to the squared mean: if it is too small, the extra variance induced by the $K$-varying method causes the performance drop to be substantial. 

\begin{theorem}\label{thm:kvary}
For the $K$-subspace algorithm, if the function $g$ has intrinsic dimension $(\delta, d, \rho)$ with probability $1 - \delta$, we have \[ \mathbb{P}\pa{g(\hat{\theta}) - g^{\star} \le \rho(g(\theta_0) - g^{\star}) + \epsilon} \ge 1 - \delta \] if we take $\tilde{O}(K(1 + 1 / C)\sigma^2 / \epsilon^2)$ steps, where $\sigma^2 = \mathrm{Var}(A^{\top}\bm{g}_t)$, assuming that $\frac{\mathrm{Var}(A^{\top}\bm{g}_t)}{ \norm{\mathbb{E}[(A^{\top}\bm{g}_t)]}^2} \ge C$ for all values of $\theta$ for some $C > 0$ and $A$ is defined as $\begin{bmatrix} A^1 & A^2 & \ldots & A^k \end{bmatrix}$. 
\end{theorem}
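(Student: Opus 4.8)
The plan is to recognize the $K$-subspace algorithm as block-coordinate stochastic gradient descent on the convex restriction of $g$ to the combined subspace, and then to mirror the argument behind \Cref{thm:static} while paying for the extra variance introduced by randomly selecting a block at each step. Concretely, I would write $A = [A^1 \cdots A^K] \in \R^{D \times dK}$ and define $h(\Theta) = g(\theta_0 + A\Theta)$ for $\Theta \in \R^{dK}$; this $h$ is convex since $g$ is. The update that Algorithm~\ref{alg:FedkTVSC} maintains on the blocks $\Sigma^{(k)}$ is exactly coordinate SGD on $h$: at each step a block $k_1$ is drawn uniformly and the $k_1$-block of $\Theta$ is moved by $-\eta A^{(k_1)\top}\bm{g}_t$, where $A^{(k_1)\top}\bm{g}_t$ is an unbiased estimate of the $k_1$-block of $\nabla h$.

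First I would pin down the target value we converge to. Since each $A^{(k)}$ is an orthonormal basis of an independent uniformly random $d$-dimensional subspace, the intrinsic-dimension hypothesis applies to any single block, say $k=1$; hence with probability at least $1-\delta$ there is a point $e \in \mathrm{span}(A^1) \subseteq \mathrm{span}(A)$ with $g(\theta_0 + e) - g^\star \le \rho(g(\theta_0)-g^\star)$. Consequently the minimizer $h^\star = \min_\Theta h(\Theta)$ of the combined problem satisfies $h^\star - g^\star \le \rho(g(\theta_0)-g^\star)$ on this event. This is the analogue of the static reduction, and it isolates the single $1-\delta$ failure event appearing in the conclusion.

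Next I would quantify the cost of the block sampling. Let $\hat{G}_t$ denote the $dK$-dimensional update direction that is nonzero only in block $k_1$. Averaging over the uniform choice of $k_1$ gives $\mathbb{E}[\hat{G}_t] = \tfrac{1}{K}\nabla h$, so after rescaling by $K$ the algorithm is SGD on $h$ with an unbiased gradient estimator and a correspondingly reduced effective step size; this rescaling is the source of the factor $K$ in the step count. For the second moment, only one block is active, so $\mathbb{E}\|\hat{G}_t\|^2 = \tfrac{1}{K}\mathbb{E}\|A^\top\bm{g}_t\|^2 = \tfrac{1}{K}\big(\|\mathbb{E}[A^\top\bm{g}_t]\|^2 + \sigma^2\big)$. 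The assumption $\mathrm{Var}(A^\top\bm{g}_t)/\|\mathbb{E}[A^\top\bm{g}_t]\|^2 \ge C$ bounds the squared-mean term by $\sigma^2/C$, giving a second-moment bound of order $\tfrac{\sigma^2}{K}(1 + 1/C)$; after the $K$-rescaling this becomes an effective noise level of $K(1+1/C)\sigma^2$. Feeding this into the standard convergence rate for SGD on a convex function with bounded-second-moment gradients yields $h(\hat\Theta) - h^\star \le \epsilon$ after $\tilde{O}(K(1+1/C)\sigma^2/\epsilon^2)$ steps (the $\tilde{O}$ absorbing the diameter term and the logarithmic factor for the high-probability version), and combining with the target bound gives $g(\hat\theta) - g^\star = h(\hat\Theta) - g^\star \le \rho(g(\theta_0)-g^\star) + \epsilon$ on the $1-\delta$ event.

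The main obstacle is this middle step: correctly bookkeeping the simultaneous effect of the uniform block selection on both the expected direction (the $1/K$ factor, hence the rescaling and the $K$ in the rate) and the second moment, and then invoking the variance-to-mean assumption in exactly the right place to produce the $(1+1/C)$ factor rather than an uncontrolled dependence on $\|\mathbb{E}[A^\top\bm{g}_t]\|$. Care is also needed because the columns of $A$ across different blocks are not mutually orthonormal, so I would rely only on convexity of $h$ and the second-moment bound, and avoid any appeal to orthonormality of the full matrix $A$.
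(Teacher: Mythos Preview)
Your proposal is correct and follows essentially the same route as the paper: parametrize by the stacked matrix $A=[A^1\cdots A^K]$, observe that the $K$-subspace update is block-coordinate SGD on the convex surrogate $h(\Theta)=g(\theta_0+A\Theta)$, rescale by $K$ to obtain an unbiased estimator of $\nabla h$, bound its second moment by $K(\|\mathbb{E}[A^\top\bm{g}_t]\|^2+\sigma^2)\le K(1+1/C)\sigma^2$ via the variance-to-mean assumption, and then invoke the static analysis. Your explicit reduction of the $h^\star-g^\star$ term to the intrinsic-dimension hypothesis applied to a \emph{single} block $A^{(1)}$ is a point the paper leaves implicit, and your caveat about the lack of global orthonormality of $A$ is appropriate but, as you note, irrelevant since only convexity of $h$ is used.
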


Finally, we prove a better guarantee for time-varying compression, taking advantage of effectively exponential decaying loss from repeatedly applying \Cref{thm:static}.

\begin{theorem}\label{thm:timevary}
For the time-varying algorithm, if the function $g$ has intrinsic dimension $(\delta, d, \rho)$ over $E$ epochs, \[ \mathbb{P}\pa{ g(\hat{\theta}) - g^{\star} \le \rho^{E}(g(\theta_0) - g^{\star}) + \frac{\epsilon\sqrt{E}}{1 - \rho}} \ge (1 - \delta)^{E} \] after taking $\tilde{O}(\sigma^2 / \epsilon^2)$ steps, where 
$\sigma^2 = \max(\mathrm{Var}[A_1\bm{g}_t], \ldots ,\mathrm{Var}[A_E\bm{g}_t])$
\end{theorem}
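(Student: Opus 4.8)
The plan is to obtain the time-varying guarantee by iterating the static result (Theorem~\ref{thm:static}) once per epoch. Write $\theta^{(e)}$ for the iterate at the end of epoch $e$, with $\theta^{(0)} = \theta_0$, and set $D_e = g(\theta^{(e)}) - g^{\star}$. The key observation is that a single epoch of the time-varying algorithm is \emph{exactly} one run of static intrinsic gradient compression inside the freshly sampled subspace $A_e$, initialized at $\theta^{(e-1)}$. So each epoch is a black-box invocation of Theorem~\ref{thm:static} with a new random subspace and a new (random) starting point.

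First I would allocate the step budget. Since the total budget is $\tilde{O}(\sigma^2/\epsilon^2)$ and there are $E$ epochs, each epoch receives $\tilde{O}(\sigma^2/(E\epsilon^2))$ steps. Reading off the SGD rate underlying Theorem~\ref{thm:static} (namely that $\tilde{O}(\sigma^2/{\epsilon'}^2)$ steps buy optimization error $\epsilon'$), this per-epoch budget corresponds to per-epoch error $\epsilon' = \epsilon\sqrt{E}$; this is precisely where the $\sqrt{E}$ in the statement originates. With $\sigma^2 = \max_e \mathrm{Var}[A_e^{\top}\bm{g}_t]$, the same $\sigma^2$ controls every epoch uniformly.

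Next I would apply Theorem~\ref{thm:static} at each epoch. Because the intrinsic-dimension hypothesis is stated uniformly over all base points $\theta_0$, it applies verbatim to the random point $\theta^{(e-1)}$; and because $A_e$ is drawn independently of the history, conditioning on everything up to the start of epoch $e$ leaves the per-epoch success probability at $\ge 1-\delta$. On that event we get the one-step contraction
\[ D_e \le \rho\, D_{e-1} + \epsilon\sqrt{E}. \]
Composing these events through the chain rule of conditional probability yields that all $E$ contractions hold simultaneously with probability at least $(1-\delta)^E$. On this global good event, unrolling the recursion gives $D_E \le \rho^E D_0 + \epsilon\sqrt{E}\sum_{i=0}^{E-1}\rho^i \le \rho^E D_0 + \frac{\epsilon\sqrt{E}}{1-\rho}$, which is the claimed bound.

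The main obstacle I anticipate is the clean composition of the per-epoch events into the product $(1-\delta)^E$ despite the statistical dependence of $\theta^{(e-1)}$ on the earlier random subspaces and batches: one must argue that conditioning on the history degrades neither the subspace-quality event nor the SGD convergence at epoch $e$. The uniform-over-$\theta_0$ form of the intrinsic-dimension definition resolves the former. For the latter, a secondary technical point is that Theorem~\ref{thm:static} bounds an optimization error that is naturally an in-expectation quantity, whereas here it must be intersected with the subspace event; I would upgrade it to a high-probability statement via a standard Markov or concentration argument, absorbing the resulting logarithmic factors into the $\tilde{O}(\cdot)$.
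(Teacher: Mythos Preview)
Your proposal is correct and follows essentially the same approach as the paper: apply Theorem~\ref{thm:static} once per epoch with a per-epoch budget of $\tilde{O}(\sigma^2/(E\epsilon^2))$ steps (whence the $\epsilon\sqrt{E}$), obtain the one-step contraction $D_e \le \rho D_{e-1} + \epsilon\sqrt{E}$, intersect the $E$ independent subspace events for the $(1-\delta)^E$ probability, and unroll the recursion via the geometric sum. The paper's own argument is in fact more terse than yours and does not explicitly address the conditioning and in-expectation-versus-high-probability points you flag in your final paragraph.
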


\begin{figure}[t!]%
    \centering
    \subfloat[\centering Accuracy on CIFAR-10 across compression rates. ]{{\includegraphics[width=0.42\textwidth]{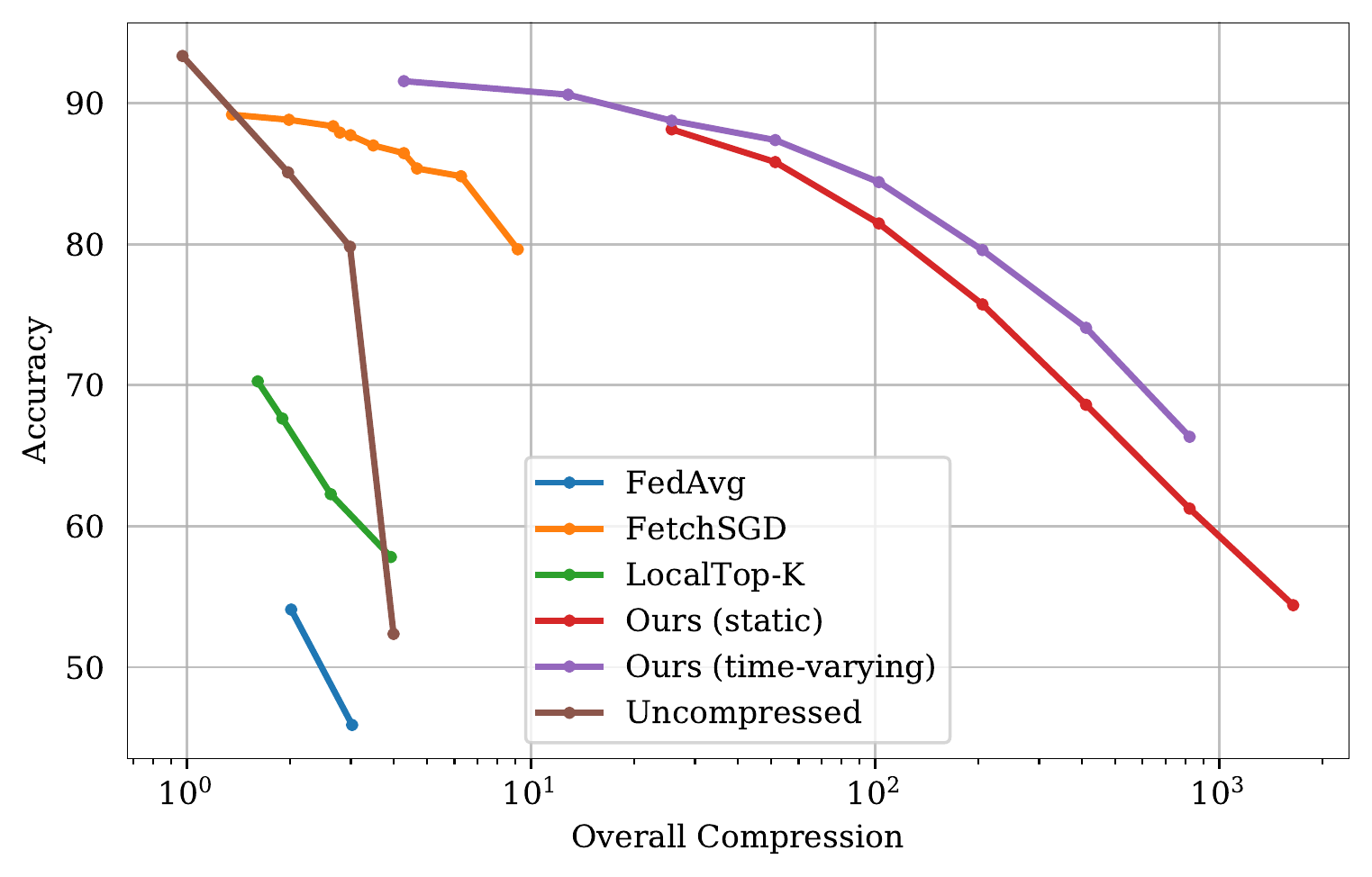}}}%
    \qquad
    \subfloat[\centering Training curves on CIFAR-10 of static and time varying compression for the intrinsic dimension $d=2000$. \vspace{-2mm} ]{{\includegraphics[width=0.42\textwidth]{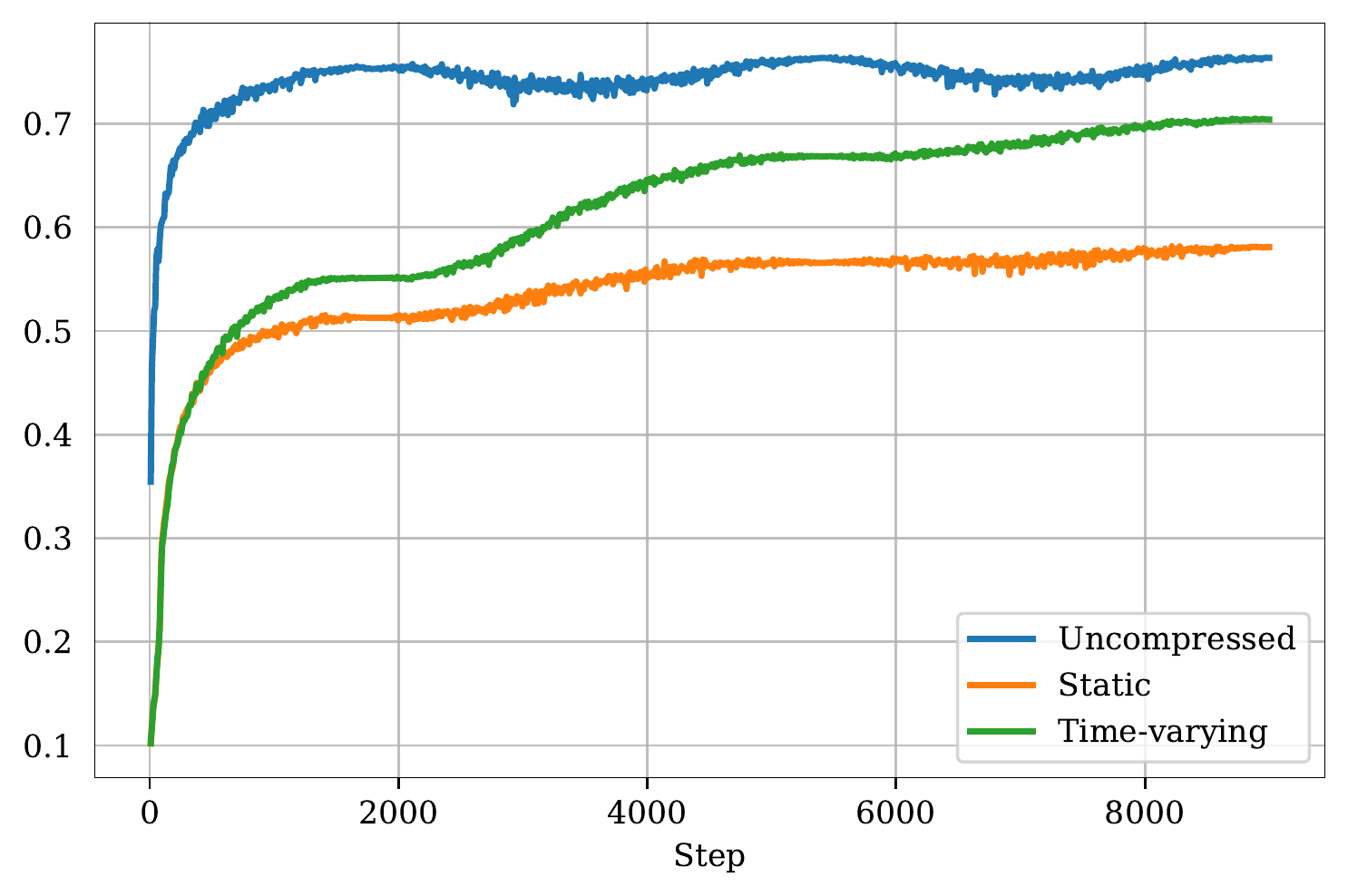}}%
}%
    \caption{Results on computer vision benchmarks. Both static and time-varying intrinsic gradient dimension significantly outperform prior work, with time-varying intrinsic compression performing best. On the right, we see that time-varying and static compression perform similarly at the beginning of training, but time-varying outperforms static with equal space when the compression is higher. For the FedAvg and uncompressed methods with compression rates above 1, compression was performed by training for fewer epochs.}
    \label{fig:cvfig}
    \vspace{-6mm}
\end{figure}

\begin{figure}[h]%
    \centering
    \subfloat[\centering Perplexity on PersonaChat  ]{{\includegraphics[width=0.4\textwidth]{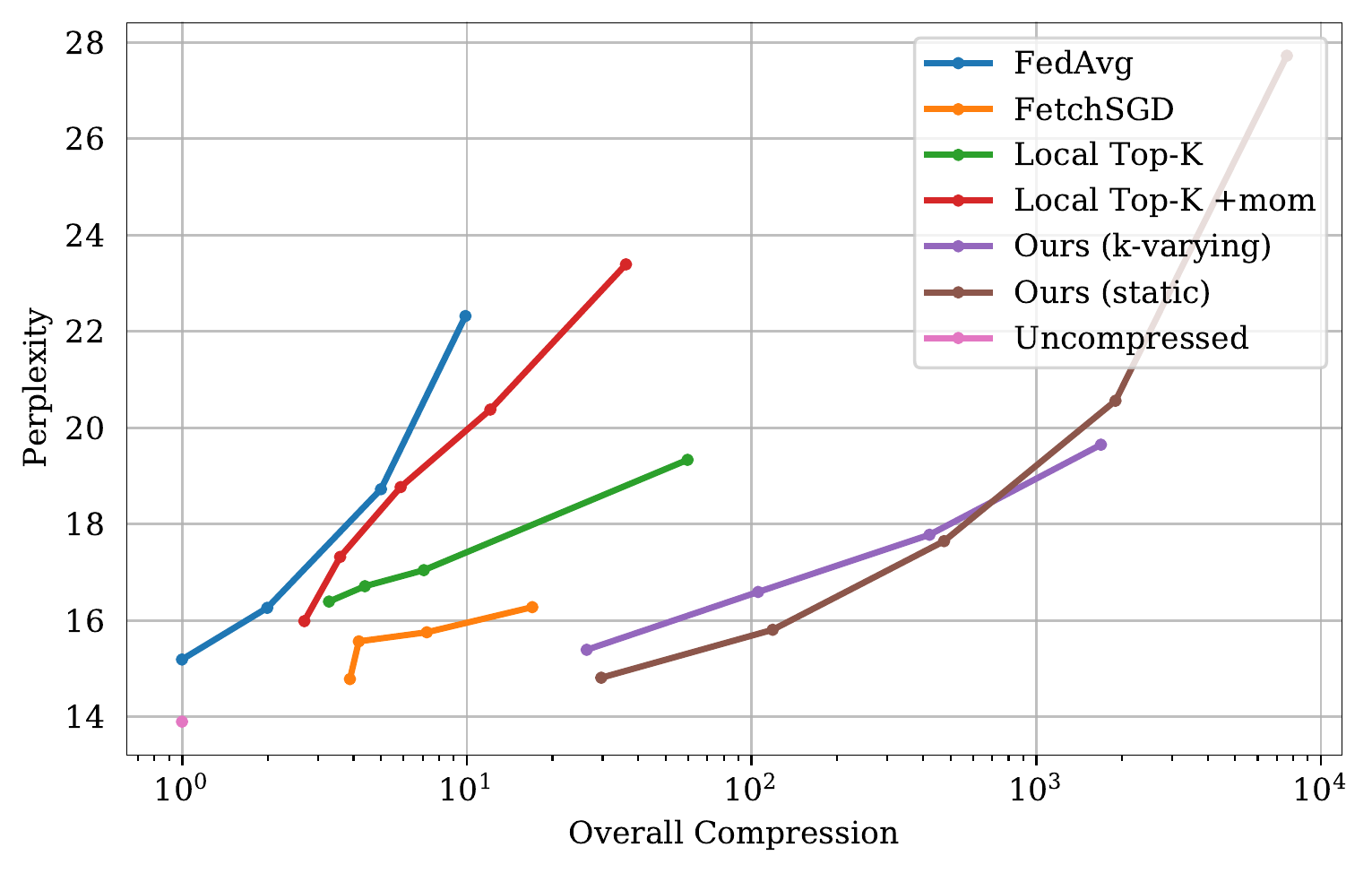} }}
    \qquad
    \subfloat[\centering Accuracy on SST-2  ]{{\includegraphics[width=0.4\textwidth]{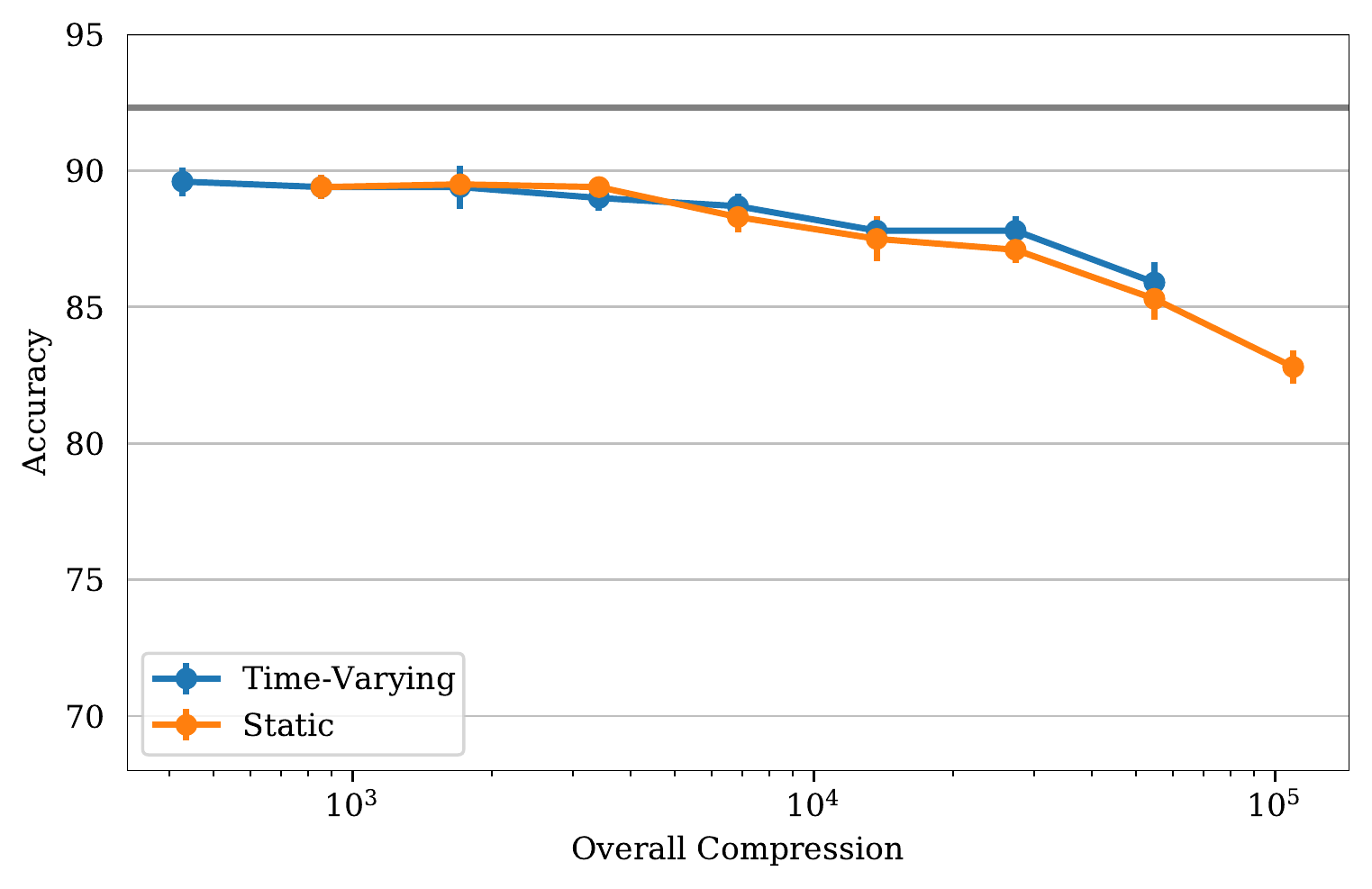} }}%
    \caption{Results on NLP benchmarks. $K$-subspace and static compression both strongly outperform all other methods, though $K$-subspace has the added benefit of much lower upload compression (not shown). 
    For the SST-2 results, error bars show the standard error of performance calculated over five runs with different random seeds.
    }
    \label{fig:nlpfig}
    \vspace{-4mm}
\end{figure}

\section{Experiments}\label{sec:exps}

We evaluate our method across three benchmarks: two from NLP (language modeling and text classification) and one from computer vision (image classification). 
As with previous works \cite{DBLP:conf/icml/RothchildPUISB020,mcmahan2017communication}, we simulate a federated setting in order to scale to large numbers of clients (upwards of $10,000$). We perform experiments in both non-IID and IID settings.

\paragraph{Image Classification (ResNet-9 on CIFAR-10)}

First, we consider image classification on CIFAR-10, a dataset of 50,000 $32\times32$px images. We use the same experimental setup as \cite{DBLP:conf/icml/RothchildPUISB020}: we split the data between 10,000 clients in a non-IID fashion, such that each client only has data from a single class. At each step, we sample 100 clients at random, such that each gradient step corresponds to 500 images. We perform 24 rounds  of communication between all clients (i.e. 24 epochs). 

We use a ResNet-9 architecture with 6,570,880 trainable parameters for our fair comparison to previous work. Note that the model does not have batch normalization, as it would not make sense in a setting where each client has so few examples. Due to the substantial number of epochs performed here, we experiment with both static and time-varying gradient compression ($K$-subspace compression is better suited to settings involving fewer rounds of communication). We experiment with intrinsic dimensions from 4000 to 256000.  

Our results are shown in \Cref{fig:cvfig}. Whereas FedAvg and Top-K struggle at even modest compression rates (e.g. $3\times$), the intrinsic gradient compression methods deliver strong performance at much larger compression rates. The intrinsic methods outperform the current state-of-the-art gradient compression method, FetchSGD~\cite{DBLP:conf/icml/RothchildPUISB020}, by a large margin, and easily scales to high compression rates (e.g. $100\times$). Finally, we see that time-varying intrinsic compression generally outperforms static compression for the same communication cost.

\paragraph{Text Classification (BERT on SST-2)}

Next, we consider text classification on the Stanford Sentiment Treebank-v2 (SST-2) dataset \cite{sst2}, a common sentiment analysis dataset. For this experiment, we consider an IID data split into 50 and 500 clients, respectively. We employ the popular BERT \cite{devlin-etal-2019-bert} architecture with 109M parameters and we use intrinsic dimensions from 200 to 25600. The purpose of this experiment is to push the limits of gradient compression; we project the 109M-dimension BERT gradients into as few as 200 dimensions. 

Our results are given in \Cref{fig:nlpfig}. First, in agreement with \cite{aghajanyan2020intrinsic}, we find that it is possible to achieve remarkably high compression ratios for text classification: we get nearly full performance even when compressing the 109M-dimension parameter vector into an intrinsic space of dimension 16,384. Furthermore, we find that time-varying intrinsic gradient compression consistently outperforms static intrinsic gradient compression at the same compression rate. 

\paragraph{Language Modeling (GPT-2 on PersonaChat)}

Lastly, we consider language modeling on the PersonaChat~\cite{zhang2018personalizing} dataset. The dataset has a non-IID split into 17,568 clients in which each client is assigned all data corresponding to given personality; as a result, it is widely used in federated learning simulations. We perform language modeling using the GPT-2 transformer architecture (124M parameters) and conduct two rounds of training across the clients (i.e. two epochs). Due to the low number of training rounds, it is natural to apply \textit{static} and $K$-subspace gradient compression (we use $K=8$).\footnote{Time-varying compression does not make sense here, as its benefit is derived from the setting where there are many rounds of communication between the clients.}

Our results are shown in \Cref{fig:nlpfig}. Overall, intrinsic dimension-based gradient compression vastly outperforms a wide range of prior approaches to reducing communication in federated learning. On the low-compression end of the spectrum, we obtain nearly full performance with superior compression rates to the state-of-the-art FetchSGD~\cite{DBLP:conf/icml/RothchildPUISB020}. On the high-compression end of the spectrum, we scale better than previous approaches. For example, we obtain a perplexity of around 20 even with an extremely high compression rate of 1898$\times$. 

Finally, we see that $K$-subspace intrinsic compression performs similarly to (or slightly worse) than static compression at the same level of overall compression. However, if it is more important to conserve upload bandwidth than download bandwidth, then $K$-subspace intrinsic gradient compression significantly outperforms static intrinsic gradient compression (see \Cref{table:personachat}). 

\paragraph{Gradient Reconstruction: Data Privacy Experiment}

One of the primary motivations of federated learning is the desire for individual clients to be able to retain data privacy while still participating in model training. However, prior work \cite{DBLP:conf/nips/ZhuLH19} has shown that if the client sends their full local model update to the server, it is sometimes possible to approximately reconstruct their local data from the model update. We investigate the extent to which an attacker can reconstruct a client's data given a \textit{compressed} gradient update, and we find that our compression helps to mitigate this reconstruction problem. Full details are included in \Cref{app:gradient_reconstruction} due to space constraints.

\vspace{-2mm}
\section{Conclusion}\label{sec:concl}

We propose a family of intrinsic gradient compression algorithms for federated learning. This family includes static compression, which performs remarkably well despite its simplicity, $K$-subspace compression, which is optimized for upload bandwidth, and time-varying compression, which improves performance by changing the intrinsic subspace over time. We provide theoretical results for our algorithms and demonstrate their effectiveness through numerous large-scale experiments. We hope that our results help make the real-world deployment of large-scale federated learning systems more feasible.

\clearpage

\bibliographystyle{unsrt}
\bibliography{biblio}

\clearpage

\onecolumn
\begin{center}
{\Large \textbf{Appendix}}
\end{center}
\appendix 

\section{Proofs Omitted in the Main Text}\label{appa:proofs}

\subsection{Proof of \Cref{thm:static}}\label{appa:static}

 First, we show that $h(\theta') := g(A\theta' + \theta_0)$ is convex in $\theta'$. 

\begin{lemma}
$h$ is convex. 
\end{lemma}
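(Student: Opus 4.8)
The plan is to prove this as the standard fact that the composition of a convex function with an affine map is convex. Here the outer function is $g$, which is convex by hypothesis, and the inner map $\phi(\theta') := A\theta' + \theta_0$ is affine from $\R^d$ to $\R^D$. So $h = g \circ \phi$, and I would verify the convexity inequality directly from the definition rather than invoking any differentiability of $g$ (the theorem only assumes convexity).

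Concretely, I would fix arbitrary $\theta_1', \theta_2' \in \R^d$ and $\lambda \in [0,1]$, and the key step is to observe that an affine map commutes with convex combinations:
\[
\phi\big(\lambda \theta_1' + (1-\lambda)\theta_2'\big)
= A\big(\lambda \theta_1' + (1-\lambda)\theta_2'\big) + \theta_0
= \lambda\big(A\theta_1' + \theta_0\big) + (1-\lambda)\big(A\theta_2' + \theta_0\big)
= \lambda\,\phi(\theta_1') + (1-\lambda)\,\phi(\theta_2').
\]
Here the linearity of $A$ is what lets me distribute across the combination, and the two copies of $\theta_0$ recombine because $\lambda + (1-\lambda) = 1$ — this last point is precisely where the affine (rather than merely linear) structure is handled cleanly.

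Then I would simply chain this identity with the convexity of $g$:
\[
h\big(\lambda \theta_1' + (1-\lambda)\theta_2'\big)
= g\big(\lambda\,\phi(\theta_1') + (1-\lambda)\,\phi(\theta_2')\big)
\le \lambda\, g(\phi(\theta_1')) + (1-\lambda)\, g(\phi(\theta_2'))
= \lambda\, h(\theta_1') + (1-\lambda)\, h(\theta_2'),
\]
which is exactly the defining inequality for convexity of $h$. Since $\theta_1', \theta_2', \lambda$ were arbitrary, this completes the argument.

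I do not expect any genuine obstacle here: the result is a textbook composition fact, and the only thing worth stating carefully is the affine-combination identity above, since that is the one place the specific form $g(\theta') = A\theta' + \theta_0$ (as opposed to a general nonlinear reparametrization) actually gets used. It is worth noting why this lemma matters for the surrounding \Cref{thm:static}: convexity of $h$ is what allows the authors to treat the restricted problem on the $d$-dimensional subspace as an ordinary convex optimization problem, so that standard stochastic gradient descent guarantees apply to the compressed iterates $\theta_t'$.
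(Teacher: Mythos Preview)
Your proof is correct and follows essentially the same approach as the paper's: both verify the convexity inequality directly by using that the affine map $\theta' \mapsto A\theta' + \theta_0$ preserves convex combinations and then applying the convexity of $g$. The paper's version is simply more terse, collapsing your intermediate affine-identity step into a single line.
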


\begin{proof}
We have 
\begin{align*}
    h(\lambda\theta_1' + (1 - \lambda)\theta_2') &= g(A(\lambda\theta_1' + (1 - \lambda)\theta_2') + \theta_0) \\ 
    &\le \lambda g(A\theta_1' + \theta_0) + (1 - \lambda) g(A\theta_2' + \theta_0) \\
    &= \lambda h(\theta_1') + (1 - \lambda) h(\theta_2')
\end{align*}
and we may conclude.
\end{proof}

We can now write \[ h(\bm{x}_t) - g^{\star} = (h(\bm{x}_t) - h^{\star}) + (h^{\star} - g^{\star}) \]

We can bound the first term with a result from \cite{scaffold} because $h$ is convex, and thus classical convex optimization algorithms will converge quickly (namely, within $\tilde{O}(\sigma^2 / \epsilon^2)$ steps). The second term is bounded by our assumption on the intrinsic dimension of the function $g$. With at least probability $1 - \delta$, we have that $h^{\star} - g^{\star}$ is at most $\rho (g(\theta_0) - g^{\star})$.

\subsection{Proof of \Cref{thm:kvary}}

In this part of the problem, it is not immediately clear how to fit it into the existing SGD framework. First, to parametrize $h$ we use \[ A = \begin{bmatrix} A_1 & A_2 & \ldots & A_k \end{bmatrix}. \] and take $h(\theta') = g(A\theta' + \theta_0)$. The correct gradient of this function is $A^{\top} \bm{g}_t$, where $\bm{g}_t$ is the true gradient. However, now define \[ A_i' = \begin{bmatrix} 0 & \ldots & \underbrace{A^{(i)}}_{i\text{th index}} & \ldots 0  \end{bmatrix}  \]

Then, we claim that our algorithm is equivalent to using $kA_i'^{\top}\bm{g}_t$ as an unbiased gradient estimate. Thus, the SGD equation looks like $\theta'_{t+1} = \theta'_{t} - A_i'^{\top} \bm{g}_t$, and after multiplying both sides by the matrix $A$ we get \[ \theta_{t+1} = \theta_t - AA_i'^{\top} \bm{g}_t = \theta_t - A_i'A_i'^{\top}\bm{g}_t = \theta_t - A^{(i)}A^{(i)\top}\bm{g}_t, \] which matches our algorithm for $K$-subspace compression.

It remains to compute the variance of the gradients $A_i'^{\top}\bm{g}_t$, which is used in the SGD bound. We obtain that $\BE[\bm{g}_t^{\top}A_i'A_i^{'\top}\bm{g}_t] = k\BE[\bm{g}_t^{\top}AA^{\top}\bm{g}_t]$. Note that 

\begin{align*}
    \mathrm{Var}[A_i^{\top}\bm{g}_t] &= \mathbb{E}[\bm{g}_t^{\top}A_iA_i^{\top}\bm{g}_t] - (\mathbb{E}[A_i^{\top}\bm{g}_t])^2 \\
    &= k((\mathbb{E}[A_i^{\top}\bm{g}_t])^2 + \mathrm{Var}[A_i^{\top} \bm{g}_t]) - (\mathbb{E}[A_i^{\top}\bm{g}_t])^2 \\
    &\le k((\mathbb{E}[A_i^{\top}\bm{g}_t])^2 + \mathrm{Var}[A_i^{\top} \bm{g}_t]) \\
    &\le k\pa{1 + \frac{1}{C}}\mathrm{Var}[A^{\top} \bm{g}_t])
\end{align*}

Thus, we have that the true variance, given the ratio, is at most $K(1 + C) / C = K(1 + 1/C)$ times the original variance. The rest of the analysis is exactly the same as \Cref{appa:static}, and we may conclude. 

\subsection{Proof of \Cref{thm:timevary}}

Here, we repeatedly apply \Cref{thm:static} by using the fact that we essentially sample fresh directions each time. Intuitively, the time-varying design implies that each new subspace choice is a fresh opportunity to get closer to the optimum. Each epoch lets us get closer and closer to the desired optimum. 

We have that after $\sigma^2 / E\epsilon^2$ iterations from \cite{scaffold}, the loss is at most $r(g(\theta_0) - g^{\star})$, where $r(x) := \rho x + \epsilon \sqrt{E}$. By repeatedly applying this result, with probability at least $(1 - \delta)^{E}$, the final loss is at most $r^{E}(g(\theta_0) - g^{\star})$, where \[ r^{E}(x) = \rho^{E} x + (\rho^{E-1}\epsilon\sqrt{E} + \ldots + \epsilon \sqrt{E}) \le \rho^{E} x + \frac{\epsilon\sqrt{E}}{1 - \rho}, \] and we may conclude.

\section{$K$-subspace Intrinsic Gradient Compression}

This is given in \Cref{alg:FedkTVSC}.
\section{Additional Related Work}\label{app:additional_related_work}

\subsection{Intrinsic Dimensionality} As mentioned in the main paper, the concept of measuring the intrinsic dimensional of loss landscapes was introduced by \cite{li2018measuring}. \cite{li2018measuring} consider optimizing a $D$-parameter model in a random $d$-dimensional subspace of the full parameter space. They define the intrinsic dimension of the optimization problem as the minimum dimension $d$ for which a solution to the problem can be found, where a ``solution'' refers attaining a certain percentage of the maximum possible validation accuracy (i.e. the validation accuracy obtained by optimizing in all $D$ dimensions). They use a fixed cut-off of $90$\% accuracy for their experiments. \cite{aghajanyan2020intrinsic} apply these ideas in the setting of finetuning NLP models. 

A number of works have tried to measure the intrinsic dimension of datasets, rather than objective landscapes. \cite{NIPS2004_74934548} introduced a maximum likelihood approach to estimating intrinsic dimensionality based on nearest-neighbors, while \cite{CERUTI20142569} employed angle and norm-based similarity. 

Finally, some works have tried to measure the intrinsic dimensionality of image representations and datasets. \cite{gong2019intrinsic} finds that the representations produced by popular image and face representation learning models (ResNet-50 and SphereFace) have quite low intrinsic dimensionalities (16 and 19, respectively). Along similar lines, \cite{pope2021the} showed that popular image datasets (MNIST, CIFAR 10, ImageNet) also have low intrinsic dimensionality. 

\subsection{Model Pruning}
There has been great interest in compressing models by using fewer weights, starting with the work of \cite{hinton2015distilling, han2015deep}. One related work is \emph{Diff Pruning} \cite{guo2020parameter}, which constrains the number of weights that can be changed from a pretrained model. In essence, diff pruning attempts to solve an $L^{0}$ minimization problem on the weights of the model, and approaches this by means of a relaxation to a problem that is more amenable to a standard analysis. 

A number of other works have explored the idea of finetuning by only modifying a subset of a model's parameters. 
\cite{ravfogel2021bitfit} finetunes only the layer biases, whereas \cite{houlsby2019parameter} introduces the concept of low-parameter adapters between each layer. Compared to \cite{ravfogel2021bitfit} our method is far more flexible, allowing any number of parameters to be changed. Compared to \cite{houlsby2019parameter} our methods are architecture-independent, and can be applied to any model.

\paragraph{Federated Learning}

Federated learning is generally concerned with the distributed training of machine learning models across many devices, each of which holds private data. Many aspects of this federated setup are separate subfields of research, including how to ensure the privacy of client-held data \cite{Xie2020DBA,bhagoji2019analyzing}, how to deal with heterogeneous data and networks \cite{li2020federated,li2020convergence,yu2020federated}, how to reconcile weights/gradients from multiple clients \cite{li2020federated,wang2020federated,pmlr-v119-li20g}, how to manage clients in a fault-tolerant manner, deployment on mobile/iot devices \cite{chaoyanghe2020fedml}, and fairness \cite{mohri2019agnostic}. 

The classic FedAvg~\cite{mcmahan2017communication} algorithm communicates model updates after multiple local training iterations.  FedProx~\cite{li2020federated} generalized and re-parametrized FedAvg, and FedMA~\cite{wang2020federated} improved this approach by matching and averaging hidden layers of networks with similar activations at each communication round. 
Additionally, FedAwS~\cite{yu2020federated} considered federated averaging in the case where each client has data from only a single class.

\section{Further Experimental Details and Analysis}\label{app:additional}
In the main paper, we included a number of figures demonstrating our performance in comparison to prior work. Here, we include tables with our precise results for clarity and in order to facilitate future comparison with our work. 

\subsection{General Implementation Details}

We perform our language modeling experiments on 8 RTX 6000 GPUs and our image/text classification experiments on 1 RTX 6000 GPU. Regarding the intrinsic gradient compression matrices $A_i$, we employ the FastFood method described in \Cref{sec:fedgradient_choice} using a CUDA implementation of the fast Walsh-Hadamard transform from \cite{thomas2018learning}.

\subsection{Further PersonaChat Analysis}

First, we give more details on the PersonaChat dataset, which were omitted from the main paper due to space constraints. The PersonaChat dataset \cite{zhang2018personalizing} was collected by first giving imaginary personas (defined by a set of 5 sentences) to Amazon Mechanical Turk workers and asking them to take on those personas. Then, the system paired workers and asked them to discuss. Since the personas were imaginary and no personally identifiable information was exchanged (in particular, the workers were explicitly told to not use personally identifiable information) the dataset does not contain personally identifiable information. The dataset has a non-IID split into 17,568 clients in which each client is assigned all data corresponding to given personality; as a result, it is widely used in federated learning simulations. We perform language modeling using the GPT-2 transformer architecture (124M parameters). We perform \textit{static} and $K$-subspace gradient compression using intrinsic dimensions of 16384, 65536, 262144, 1048576, and 4194304. 

We show full results on PersonaChat below, complete with upload and download compression. Overall compression is calculated as average compression over both upload and download. We compare with FedAvg~\cite{mcmahan2017communication}, Top-K, and FetchSGD~\cite{DBLP:conf/icml/RothchildPUISB020}. FedAvg is the baseline federated learning approach involving sending and averaging weights. Top-K refers to sending the top gradients, sorted by magnitude. FetchSGD compresses the weights with sketching.

Our method significantly outperforms competing approaches across the board. We obtain an accuracy close to that of uncompressed optimization using 29.7$\times$ overall compression; FedAvg and Top-K both fail to achieve such strong results, while FetchSGD does so at a significantly lower compression rate. 

Next we compare static and K-varying intrinsic gradient compression. When comparing overall compression rates, static compression is slightly better than K-varying compression. However, K-varying compression is optimized for low upload bandwidth; it obtains much better upload compression rates than static compression at the same accuracy. For example, K-varying compression with $k=8$ and $d=65536$ yields perplexity $17.6$ at upload compression $1900\times$, whereas static compression with $d=262144$ yields perplexity $17.4$ at upload compression $475\times$.

\begin{table*}
\begin{center}

\addtolength{\leftskip} {-1.2cm}
\addtolength{\rightskip}{-1.2cm}
\begin{tabular}{clrcccc}
\toprule
{}                                      &  Name                      &  Intrinsic Dim. &  PPL  &  Up. Comp. &  Down. Comp. &  Total Comp. \\ \midrule
                                        &  Uncompressed              &                 &  13.9 &  1         &  1           &  1    \\ \midrule
\cite{mcmahan2017communication}         &  FedAvg (2 local iters)    &                 &  16.3 &  2         &  2           &  2    \\
\cite{mcmahan2017communication}         &  FedAvg (5 local iters)    &                 &  20.1 &  5         &  5           &  5    \\ \midrule
                                        &  Local Top-K ($k=50,000$)  &                 &  19.3 &  30.3      &  2490        &  60   \\
                                        &  Local Top-K ($k=500,000$) &                 &  17.1 &  3.6       &  248         &  7.1  \\ \midrule
\cite{DBLP:conf/icml/RothchildPUISB020} &  FetchSGD ($k=25,000$)     &                 &  \textbf{14.8} &  3.8       &  100         &  7.3  \\
\cite{DBLP:conf/icml/RothchildPUISB020} &  FetchSGD ($k=50,000$)     &                 &  15.8 &  2.4       &  10          &  3.9  \\ \midrule
                                        &  Ours (static)             &  16384        &  27.7 &  7595      &  7595        &  7595 \\
                                        &  Ours ($K$-subspace)          &  16384        &  19.6 &  7595      &   949     &  1688 \\
                                        &  Ours (static)             &  65536        &  20.6 &  1900      &  1900        &  1900 \\
                                        &  Ours ($K$-subspace)          &  65536        &  17.8 &  1900     &  237        &  422 \\
                                        &  Ours (static)             &  262144       &  17.6 &  475      &  475        &  475 \\
                                        &  Ours ($K$-subspace)          &  262144       &  16.6 &  475      &    59.3      &  105 \\
                                        &  Ours (static)             &  1048576      &  15.8 &  119      &  119        &  119 \\
                                        &  Ours ($K$-subspace)          &  1048576      &  15.4 &  119      &  14.8     &  26.3 \\
                                        &  Ours (static)             &  4194304      &  \textbf{14.8} &  29.7      &  29.7        &  29.7 \\
\bottomrule
\end{tabular}
\vspace{4mm}
\caption{Results of our method and comparison to prior work, including the state-of-the-art in gradient compression (FetchSGD). The table shows language modeling perplexity (lower is better) and compression rates (higher is better). We show upload, download, and total compression rates. For our intrinsic gradient compression results, we show static and $K$-subspace compression for a range of dimensions between $16386$ and $4194304$. For $K$-subspace compression we use $K=8$. }
\end{center}
\vspace{-4mm}
\label{table:personachat}
\end{table*}

\subsection{Further SST-2 Details and Analysis}

\begin{table*}
\centering
\begin{tabular}{lcccc} \toprule 
Intrinsic Dim. &  200                 &  400               &  800               &  1,600                 \\ \midrule
Static         &  82.8 ($\pm 0.69$)   &  85.3 ($\pm 0.89$) &  87.1 ($\pm 0.57$) &  87.5 ($\pm 0.94$)    \\ \midrule
Time-Varying   & 85.9 ($\pm 0.85$) &  87.8 ($\pm 0.61$) &  87.8 ($\pm 0.59$) &  88.7 ($\pm 0.54$)    \\ \bottomrule
\end{tabular}

\vspace{5mm}

\begin{tabular}{lcccc}\toprule
Intrinsic Dim. &  3,200             &  6,400             &  12,800            &  25,600                 \\ \midrule
Static         &  88.3 ($\pm 0.65$) &  89.4 ($\pm 0.33$) &  89.5 ($\pm 0.21$) &  89.5 ($\pm 0.21$)      \\ \midrule
Time-Varying   &  89.0 ($\pm 0.53$) &  89.4 ($\pm 0.91$) &  89.4 ($\pm 0.19$) &  89.4 ($\pm 0.19$)      \\ \bottomrule
\end{tabular}
\vspace{3mm}
\caption{Accuracy and standard error of a BERT model trained on the Stanford Sentiment Treebank v2 (SST-2) for varying intrinsic dimensions. We calculate the standard error over five trials with different random seeds. We see that for fixed dimension, time-varying intrinsic gradient compression outperforms static intrinsic gradient compression.}
\label{table:glue}
\end{table*}

Regarding the experimental setup, we perform 30 rounds (i.e. 30 epochs) of training for all compressed runs, while we perform 6 for the uncompressed baseline (as it converges more quickly). Federated learning experiments has previously been criticized for being challenging to reproduce; as a result, we perform each run five times over different random seeds. Due to the substantial number of epochs performed here, it is natural to apply static and time-varying intrinsic gradient compression. We use intrinsic dimensions of 200, 400, 800, $\dots$, 25600. 

In \Cref{table:glue}, we show full results for the SST-2 dataset with static and time-varying gradient compression for a range of intrinsic dimensions. We include in this experiment an demonstration of the robustness of our method to variation in random seeds; we run each experiment five times using separate random seeds (i.e. different intrinsic subspaces and model initializations). We report standard errors in \Cref{table:glue} and include \Cref{fig:nlpfig} with error bars in the main paper. Overall variability is quite low. 

We also see that time-varying intrinsic gradient compression outperforms static intrinsic compression, especially for low intrinsic dimensions. For example, time-varying compression at $d=200$ outperforms static compression with $d=400$, and time-varying compression with $d=400$ outperforms static compression with $d=800$.

\section{Gradient Reconstruction: Data Privacy Experiment} \label{app:gradient_reconstruction}

\begin{figure}%
    \centering
    \subfloat[\centering Input]{{\includegraphics[width=0.3\textwidth]{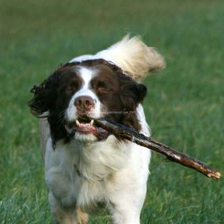}}}%
    \quad
    \subfloat[\centering Reconstruction from full gradient. ]{{\includegraphics[width=0.3\textwidth]{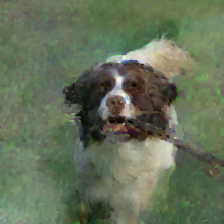}}}%
    \quad
    \subfloat[\centering Reconstruction from gradient with intrinsic compression. ]{{\includegraphics[width=0.3\textwidth]{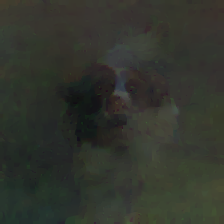}}}%
    \caption{Image reconstruction from gradients with and without our intrinsic gradient compression method. On the left, we show the original image. In the center, we show the result of reconstructing the image from a single gradient from a ResNet-152 model (60M parameters), produced using the method of \cite{DBLP:conf/nips/ZhuLH19}. On the right, we show the result of the same image reconstruction method applied to an gradient compressed by our algorithm using intrinsic dimension 65,536.}
    \label{fig:inverse_gradient}
\end{figure}

Data privacy is one of the central motivations of federated learning. 

However, a number of works have shown that if the client does not have a large amount of data and the client sends back their full local gradient, it is possible to approximately reconstruct their local data from the model. This is a significant problem, because their data would then effectively be visible to the central server and any attackers that intercept their communications. 

This is a significant problem, because their data would then effectively be visible to the central server and any attackers that intercept their communications.

Here, we show that compressing gradients with our approach can mitigate this problem. 
Specifically, we check if our compressed gradients can be reconstructed with the iterative procedure proposed by \cite{DBLP:conf/nips/ZhuLH19}, which takes a gradient and a model and tries to recover an image.
As in \cite{DBLP:conf/nips/ZhuLH19}, we use a ResNet-152 model on a randomly selected image from ImageNet and run for 24,000 iterations (by which time the method has converged). We reconstruct the image both from the full gradient (the center image) and from a the intrinsically-compressed image (the right image) with intrinsic dimension 65,536. 

As seen in \Cref{fig:inverse_gradient}, given the full gradient it is possible to obtain a fairly good reconstruction of the image. By contrast, with our method, the reconstruction is visually much less similar to the original image. 
Of course, our method does not solve the problem entirely; an outline of the dog in the image is still visible because the compressed gradient still contains some information about the local data. To solve the issue entirely, it would be necessary to use a method such as differential privacy. 

\end{document}